\documentclass{article}

    \PassOptionsToPackage{numbers}{natbib}

\usepackage{adjustbox}
\usepackage{subcaption}
\usepackage{wrapfig}

\usepackage[preprint]{neurips_2023}


\usepackage{soul}
\usepackage[utf8]{inputenc} 
\usepackage[T1]{fontenc}    
\usepackage{hyperref}       
\usepackage{url}            
\usepackage{booktabs}       
\usepackage{amsfonts}       
\usepackage{nicefrac}       
\usepackage{microtype}      
\usepackage{xcolor}         
\usepackage{tikz}

\usepackage{amsmath}
\usepackage{amssymb}
\usepackage{mathtools}
\usepackage{amsthm}
\usepackage{caption}
\usepackage[normalem]{ulem}
\theoremstyle{plain}
\newtheorem{theorem}{Theorem}[section]

\theoremstyle{definition}

\title{2-D SSM: A General Spatial Layer\\ for Visual Transformers}

%

\author{%
  Ethan Baron\thanks{Equal Contribution. Order determined by coin flip} \\
  Department of Computer Science\\
  Tel Aviv University\\
  \texttt{barone@mail.tau.ac.il} \\
  \And
  Itamar Zimerman* \\
 Department of Computer Science\\
  Tel Aviv University\\
  \texttt{zimerman1@mail.tau.ac.il} \\
  \AND
  Lior Wolf \\
  Department of Computer Science\\
  Tel Aviv University\\
  \texttt{wolf@mail.tau.ac.il} \\
}

\begin{document}

\maketitle

\begin{abstract}
A central objective in computer vision is to design models with appropriate 2-D inductive bias. Desiderata for 2D inductive bias include two-dimensional position awareness, dynamic spatial locality, and translation and permutation invariance. 
To address these goals, we leverage an expressive variation of the multidimensional State Space Model (SSM). 
Our approach introduces efficient parameterization, accelerated computation, and a suitable normalization scheme. Empirically, we observe that incorporating our layer at the beginning of each transformer block of Vision Transformers (ViT) significantly enhances performance for multiple ViT backbones and across datasets. The new layer is effective even with a negligible amount of additional parameters and inference time. Ablation studies and visualizations demonstrate that the layer has a strong 2-D inductive bias. For example, vision transformers equipped with our layer exhibit effective performance even without positional encoding. 
\footnote[1]{The implementation of the method is available at \url{https://github.com/ethanbar11/ssm_2d}}
\end{abstract}

\section{Introduction}



Incorporating image-specific inductive bias into computer vision networks could play a crucial role in their success, by shaping the hypothesis space in a way that fits image data and improves generalization. Common ingredients of image-specific inductive bias include two-dimensional neighborhood structure, locality, translation equivariance and invariance, and extraction of hierarchical features.  Traditionally, it was injected into the model through the backbone architecture. However, more recently, it has been modeled as part of the data. For example, two-dimensional neighborhood structures are typically expressed in one of two ways: (i) Vision Transformers~\cite{dosovitskiy2021an} use 1-D positional encoding~\cite{vaswani2017attention}, which is considered weak inductive bias. (ii) ConvNets employ 2-D kernels, which provide strong priors on the underlying image structure~\cite{ulyanov2018deep}. 

Most ConvNets employ relatively small filters in the convolution layers, and the balance between local and global features is handled by increasing the receptive field with depth. However, other kernel sizes can be beneficial. For example, ConvNeXt improved ResNet by $0.7\%$ on Imagenet, by only increasing its kernel size from $3\times 3$ to $7\times 7$~\cite{liu2022convnet}. More generally, using fixed-size filters limits the type of dependencies the layer can capture.

The objective of this study is to develop a new layer that is adept at integrating both local and global spatial features, with a particular focus on a two-dimensional neighborhood structure. We accomplish this by building on recent developments in 1-D SSM-based layers, which are renowned for capturing various types of dependencies. By extending this 1-D concept to 2-D, our layer is deeply rooted in control theory, and much like its predecessors, maintains a strong bias towards position awareness and parameter efficiency. 

\noindent{\bf Our main contribution\enspace} is the 2D-SSM layer, which is a new spatial layer based on Roesser’s model for multidimensional state space~\cite{2dssm_R}. We show that simple design choices, such as diagonalization and normalization, can make the layer numerically stable and efficiently computed without recurrence using a 2-D convolution (left panel of Fig.~\ref{fig:mainFig}). Our layer has some unique properties, including: (i) A strong inductive bias towards two-dimensional neighborhood and locality, which stems from the multi-dimensional recurrent rule. As far as we know, this novel concept does not appear in other layers, (ii) The new layer can capture unrestricted controllable context. The SSM parameters of the layer can be focused on short or long, horizontal, vertical, or diagonal dependencies (middle panel of Fig.~\ref{fig:mainFig}). (iii) The layer is parameter-efficient and can express kernels of any length via 8 scalars. Visualization and ablation studies demonstrate these key aspects of our layers. Finally, the layer is well-grounded in control theory, and further theoretical analysis shows that it generalizes S4ND~\cite{nguyen2022s4nd} and proves its greater expressiveness. 

Empirically, we showed that our layer can be used as a general-purpose booster for vision transformers (the schematic architecture is illustrated in the right panel of Fig. \ref{fig:mainFig}), with negligible additional parameters and computation at inference. Furthermore, it appears that our 2D-SSM surpasses standard methods, such as incorporating positional encoding, in effectively integrating positional bias into Vision Transformers.

\section{Background and Notations}

\noindent{\bf Framing\enspace} Our research delves into two emerging research domains. The first domain focuses on the development of multi-axes global convolution techniques. Although 1-D (long) global convolution has shown promise in 1-D sequence modeling, leveraging methods such as SSM~\cite{dao2022hungry,gu2021s4,gu2021combining,gupta2022diagonal,gss} or other recent approaches~\cite{fu2023simple},\cite{poli2023hyena},\cite{li2022makes}, its applicability and effectiveness in modern computer vision tasks remain uncertain. Our work aims to explore and highlight the potential of these techniques in this domain, by extending them into 2-D.

The second domain investigates the synergistic combination of attention and SSM in 1-D modeling across various domains\cite{ma2022mega,saon2023diagonal,islam2022efficient,zuo2022efficient,dao2022hungry,gss}. For example, the SSM-based H3~\cite{dao2022hungry} outperforms GPT-Neo-2.7B~\cite{black2021gpt} (as well as other transformers of the same size) with only 2 attention layers. However, the question of whether these components are complementarity in 2-D modeling remains unanswered. We provide empirical evidence supporting the complementary nature of these components.

\begin{figure}[t]
\centering
\includegraphics[width=1\textwidth]{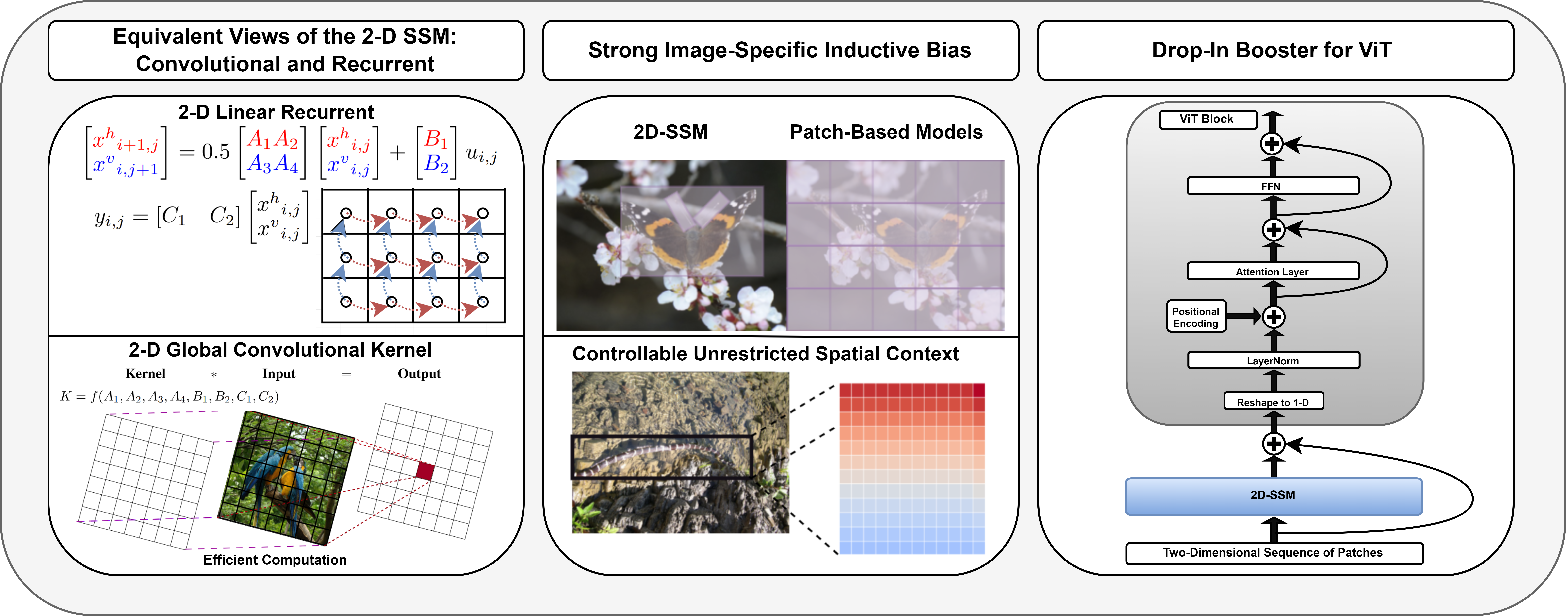}
\caption{(Left) The 2-D SSM layer is parameterized by A, B, C, and D.  It is built on top of a two-axis linear recurrent and can be efficiently computed using 2-D convolution. (Center) Since the layer is based on two-dimensional recurrence, it exhibits a strong bias toward positional awareness. The recurrent is unrestricted, allowing the layer to operate on 2-D sequences of any length. The values of $A_1, A_2, A_3$, and $A_4$ control the layer's focus, enabling it to capture short or long spatial dependencies in horizontal, vertical, or diagonal directions, as opposed to patch-based models. (Right) The layer can be easily integrated into ViT by applying it to the two-dimensional sequence of patches at the beginning of each transformer block.}
\label{fig:mainFig}
\end{figure}

\noindent{\bf State Space Model (SSM)\enspace}
The state space model maps an input scalar function $u(t) : \mathbb{R} \rightarrow \mathbb{R}$ to a N-D latent state $x(t) \in \mathbb{R}^N$ before projecting it to an output signal $y(t) : \mathbb{R} \rightarrow \mathbb{R}$:
\begin{equation}
\label{eq:s4 equation}
    \dot{x}(t) = Ax(t) + B u(t) , \quad
    y(t) = Cx(t) + Du(t)
\end{equation}
The use of SSMs is widespread across numerous scientific disciplines and is closely associated with latent state models, such as Hidden Markov Models. There is a well-known connection between linear time-invariant SSMs, such as \ref{eq:s4 equation} and continuous convolution, thus allowing efficient training using the aforementioned equation as a discrete convolution. The S4 \cite{gu2021s4} and LSSL \cite{gu2021combining} layers leveraged the SSM as a black-box representation in deep sequence modeling, with learned parameters A, B, C, and D, and achieved strong results on several tasks, especially ones that require handling long-range dependencies, such as the Long Range Arena (LRA)~\cite{tay2020long}, audio generation~\cite{goel2022s}, and long-text processing ~\cite{gss,golub2016character,dao2022hungry}. 

The underlying reasons for the suitability of S4 and SSMs for modeling long sequences were recently analyzed~\cite{li2022makes,fu2023simple}. It was found that (i) employing global kernels with decaying structures, and (ii) using regularized kernels, are both critical design choices in this area. 




\noindent{\bf Roesser's 2D-State Space Model\enspace}
The attempt to extend the classical SSM for 2-D multi-axes systems was thoroughly studied in the past. \cite{2dssm_R,ssmdim1,ssmdim2,ssmdim3,ssmdim4,ssmdim5} notes a few different formulations of the problem.
We employ Roesser's SSM model~\cite{2dssm_R} as our discrete multi-axial model, which is the most general form of 2-axial and N-axial state-space models. As opposed to other SSMs already in use in machine learning, this SSM uses $M$ states, one per axis, rather than just one. The model in 2-D form is presented here:
\begin{equation} 
\label{eq:reqRule}
\centering
x_{i,j} = %
\begin{bmatrix}
{x^h}_{i,j}\\
{x^v}_{i,j}
\end{bmatrix} , \quad 
y_{i,j} = %
\begin{bmatrix}
{C}_{1}&{C}_{2}
\end{bmatrix} 
\begin{bmatrix}
{x^h}_{i,j}\\
{x^v}_{i,j} 
\end{bmatrix}  ,\quad
%
\begin{bmatrix}
{x^h}_{i, j+1}  \\
{x^v}_{i+1, j} 
\end{bmatrix} = 
\begin{bmatrix}
A_1
A_2 \\
A_3
A_4
\end{bmatrix}
\begin{bmatrix}
{x^h}_{i, j}\\
{x^v}_{i, j}
\end{bmatrix}
+
\begin{bmatrix}
{B}_{1}\\
{B}_{2}
\end{bmatrix} u_{i, j}\,
\end{equation}
where the state $x_{i,j} \in R^{2N}$ is the concatenation of the horizontal ${x^v}_{i,j} \in R^N$ and vertical ${x^h}_{i,j} \in R^N$ states, the system matrices are $A_1, A_2, A_3, A_4 \in R^{N \times N}$ and the input and output matrices are  $B_1, B_2, C_1, C_2  \in R^N$.
There is also a learned parameter D that behaves as a skip-connection, omitted from now on for brevity.

\subsection{Notation}

The notation follows as closely as possible the notation used in the state-space layer literature~\cite{gu2021s4,gupta2022diagonal,gu2022parameterization}. Specifically, we use $H$ as the number of channels, $N$ as the state's hidden dimension, $L$ as the sequence length, $n_{ssm}$ as the number of non-shared channels, and treat $A,B,C,D \in \mathbb{R}$ as the system matrices. Note that for brevity the system matrices are treated as real-valued, even though we also test a complex version of them. The number of axes is set to $M$.

\noindent{\bf Signal dimensions\enspace} Although $N$-D SSM can be used in an $N$-Dimensional manner, since our paper focuses on using this model as a regularization method for ViT backbones and for simplifying the reading experience, we will treat it as a $2$-D SSM. 

$L_i \in \mathbb{R}$ is the sequence length along the $i$ axes,  $L_{tot} = L_1 * L_2$ is the total signal size, and  $L_{max} =\max({L_1,L_2})$.

{\bf Kernel Notation\quad} $K$ is the computed 2-D kernel such that $Y = U \ast K$, where $\ast$ denotes discrete convolution. 
\begin{align}
            y_{i,j} = C_1 {x^h}_{i,j} + C_2 {x^v}_{i,j} =       
    \sum_{0 \leq \hat{i} \leq i}\sum_{ 0 \leq \hat{j} \leq j} \big{(}C_1{k^h}_{\bar{i},\hat{j}} + C_2 {k^v}_{\bar{i},\hat{j}} ) \big{)} u_{\hat{i},\hat{j}} 
\end{align}





\subsection{Other related work}

\noindent{\bf Multi-dimensional State Space Layers\enspace} 
As far as we know, S4ND~\cite{nguyen2022s4nd} is the only previous SSM-based layer that can naturally handle multidimensional data. S4ND is built on top of S4, and contains $M$ separate instances of S4, where $M$ is the number of axes. On the forward path, each S4 layer, which we denote as $SSM_g$ factors a one-dimensional local kernel $k_g$, and a global kernel $K$ is computed as the outer products of those local kernels.

\noindent{\bf Vision Transformer Backbones\enspace}
To demonstrate the versatility and efficacy of our 2-D SSM layer as a plug-and-play component compatible with various ViTs, we evaluate its performance when integrated into the following backbone architectures:
{\bf (i) ViT} The original ViT that employs self-attention on a 1-D sequence of patches; it used a learnable 1-D position encoding. 
{\bf (ii) Swin}
The Swin Transformer \cite{liu2021swin} refines ViT by incorporating hierarchical structure and local connections within windows. It employs a shifted windowing scheme and stage-wise processing to efficiently capture global context, which enhances its performance across various vision tasks.
{\bf (iii) Mega\enspace}
The Mega \cite{ma2022mega} model introduced a single-head gated attention mechanism enhanced with an exponential moving average, which imparts position-aware local dependencies to the attention mechanism. This approach addresses the limitations of the Transformer's attention mechanism, such as weak inductive bias. Mega demonstrates superior performance across a variety of tasks, including the Long Range Arena, neural machine translation, language modeling, and image and speech classification. %
{{\bf (iv) DeiT\enspace}
~\cite{touvron2021training} is an adaptation of ViT, incorporating a class-token designed for distillation purposes.}

\noindent{\bf Adding positional bias into transformers\enspace}
By design, Vision Transformers are permutation invariant, and thus a lot of work was put into injecting bias into them. Besides the standard positional encoding, the following methods are proposed:  

\noindent{\bf Exponential Moving Average (EMA)\enspace}
The EMA is a common technique for smoothing time-series data and prioritizing recent data points. It is computed using $EMA_t = (1 - \alpha) \cdot EMA_{t-1} + \alpha \cdot u_t$, where $EMA_t$ is the current EMA value, $EMA_{t-1}$ is the previous value and $\alpha$ is the smoothing factor. It is being used in MEGA \cite{ma2022mega} to incorporate positional awareness bias into the attention.

\noindent{\bf Other 2-D bias contributions\enspace}
By design, Vision Transformers are permutation invariant, and thus a lot of work was put into injecting 2-D bias into them. A particular research direction emphasizes the introduction of positional bias via various positional encoding methods. For instance, the Swin Transformer \cite{liu2021swin} employs a learnable bias term referred to as relative positional bias. In contrast, Convit \cite{d2021convit} utilizes the relative positional bias while modeling it as a soft inductive bias. This is achieved by initializing the attention function as a convolution and allowing the model to converge toward diverse attention forms. In alternative avenues of research, efforts have been made to modify the attention window through diverse techniques, such as incorporating a two-dimensional local bias by cropping \cite{dong2021cswin} the attention window or integrating convolutional neural networks (CNNs) with attention mechanisms \cite{dai2021coatnet}, \cite{li2022uniformer}. Recently, an intriguing outcome in image classification was accomplished when the EMA mechanism was applied to image patches, as described in \cite{ma2022mega}.

\section{Method}
In this section, we present the core of the $2$-D SSM layer, which is our main technical contribution. This layer maps a $2$-dimensional sequence $u_{i,j}$ to $y_{i,j}$, where  $u_{i,j},y_{i,j} \in \mathbb{R}$ for all $ 0 \leq i \leq L_1$ and $ 0 \leq j \leq L_2$. Similarly to previous SSM-based layers, we extend the core of our layer to a multi-directional and multidimensional layer, detailed in Appendix \ref{sec:modelExtension}.

%
%

\subsection{2-D Recurrent State Space Model as Convolution} 
\label{section: methodMain}
 Eq.~\ref{eq:reqRule} is defined in a recursive manner. However, for reasons of efficiency, it is best to define operators in closed form and in a way that is concurrent with parallel processing.  Inspired by previous work~\cite{gu2021s4,gupta2022diagonal,ma2022mega,gu2022parameterization}, we exploit the fact that the SSM is linear and can therefore be expressed as a convolution with a kernel $K$. To do so, we first unroll the recurrent rule and then describe it in a closed-form formulation.

For simplicity, we assume that the initial states are zeros  $\forall j \geq 0 :  {x^h}_{-1,j} = 0,$ and $ \forall i\geq 0:  {x^v}_{i,-1} = 0 $. The horizontal and vertical states at $i=0,j=0$ are:
\begin{equation}
{x^h}_{0,0} = B_1 u_{0,0}, \quad {x^v}_{0,0} = B_2 u_{0,0} 
\end{equation}

By applying the recurrent rule once at each axis:
\begin{align}
\label{eq:rec1}
    {x^h}_{1,0} = A_1 B_1 u_{0,0} + A_2 B_2 u_{0,0} + B_1 u_{1,0} ,\quad 
    {x^v}_{0,1} = A_3 B_1 u_{0,0} + A_4 B_2 u_{0,0} + B_2 u_{0,1}
\end{align}
\begin{align}
    {x^v}_{1,0} = B_2 u_{1,0}, \quad  {x^h}_{0,1} = B_1 u_{0,1} 
\end{align}

Next, we compute ${x^h}_{1,1}$ given ${x^h}_{0,1}$ and ${x^v}_{0,1}$.
\begin{align}
\label{eq:rec2}
 {x^h}_{1,1}=A_1 A_3 B_1 u_{0,0} + A_1 A_4 B_2 u_{0,0} +  A_1 B_2 u_{0,1}+ A_2 B_1 u_{0,1} + B_1 u_{1,1}\,,\\
 =k^h_{1,1}u_{0,0} + k^{h}_{1,0}u_{0,1} + k^{h}_{0,0}u_{1,1} 
\end{align}
and in general
\begin{align}
\label{eq:simpleConvH}
{x^h}_{i,j} &= \sum_{0 \leq \hat{i} \leq i}\sum_{ 0 \leq \hat{j} \leq j} {k^h}_{\bar{i},\hat{j}} u_{\hat{i},\hat{j}} \quad , \
{x^v}_{i,j} = \sum_{0 \leq \hat{i} \leq i}\sum_{ 0 \leq \hat{j} \leq j} {k^v}_{\bar{i},\hat{j}} u_{\hat{i},\hat{j}}\,, 
\end{align}
where as explained in the notation, each element ${k^h}_{\hat{i},\hat{j}}$ , ${k^v}_{\hat{i},\hat{j}}$ is an aggregation of $A_1,A_2,A_3,A_4,B_1,B_2$ multiplications (e.g Eq. \ref{eq:rec2}) and is associated with a single path from coordinate $(0,0)$ to $(i,j)$, as presented in Fig \ref{fig:paths}.

By plugging Eq.~\ref{eq:simpleConvH} in Eq.~\ref{eq:reqRule} one obtains:
\begin{align}
    y_{i,j} = C_1 {x^h}_{i,j} + C_2 {x^v}_{i,j} =
\sum_{0 \leq \hat{i} \leq i}\sum_{ 0 \leq \hat{j} \leq j} \big{(}C_1{k^h}_{\bar{i},\hat{j}} + C_2 {k^v}_{\bar{i},\hat{j}}  \big{)} u_{\hat{i},\hat{j}} 
\end{align}
and the convolutional kernel $K$ is 
\begin{equation} 
\label{eq:K_manifest}
\forall i,j : K_{i,j} = C_1{k^h}_{\bar{i},\hat{j}}+ C_2 {k^v}_{\bar{i},\hat{j}}
\end{equation}

    


\begin{figure}
  \centering
  \begin{minipage}{0.30\linewidth}
      \centering
          \begin{tikzpicture}[scale=0.7]
\begin{scope}
    \draw (0,0) grid (5,5);
    \draw[,-,dashed]    (0.4,0.4) -- (0.4,4.5);
    \draw[->,dashed]    (0.4,4.5) -- (4.4,4.5);
    \draw[-,dashed]     (0.4,0.4) -- (4.6,0.4);
    \draw[->,dashed]    (4.6,0.4) -- (4.6,4.4);
    
    \draw[-,dashed]     (0.6,0.4) -- (0.6,1.5);
    \draw[-,dashed]     (0.6,1.5)-- (0.6,2.5);
    \draw[-,dashed]     (0.6,2.5)--(3.6,2.5);
    \draw[-,dashed]     (3.6,2.5)--(3.6,4.4);
    \draw[->,dashed]     (3.6,4.4)--(4.3,4.4);

    \draw[-,dashed]     (0.4,0.6) -- (1.5, 0.6);
    \draw[-,dashed]     (1.5,0.6)-- (2.5,0.6);
    \draw[-,dashed]     (2.5,0.6)--(2.5,1.4);
    \draw[-,dashed]     (2.5,1.4)-- (4.4,1.4);
    \draw[->,dashed]     (4.4,1.4)-- (4.4,4.4);
    \filldraw [gray] (0.5,0.5) circle (3pt);
    \filldraw [gray] (4.5,4.5) circle (3pt);
    \node[] (a1) at (5,-1){} ;
\end{scope}
\end{tikzpicture}
      \caption{Examples of paths from coordinate $(\hat{i},\hat{j})=(0,0)$ to $(i,j)=(4,4)$. Each path represents a sequence of recursive calls for Eq. \ref{eq:reqRule}.\label{fig:paths}}
    \end{minipage}
 \hfill
 \begin{minipage}{.65\linewidth}
      \centering
      \begin{tabular}{@{}c@{}c@{}c@{}}
        \includegraphics[width=1\linewidth,height=0.07\textheight]{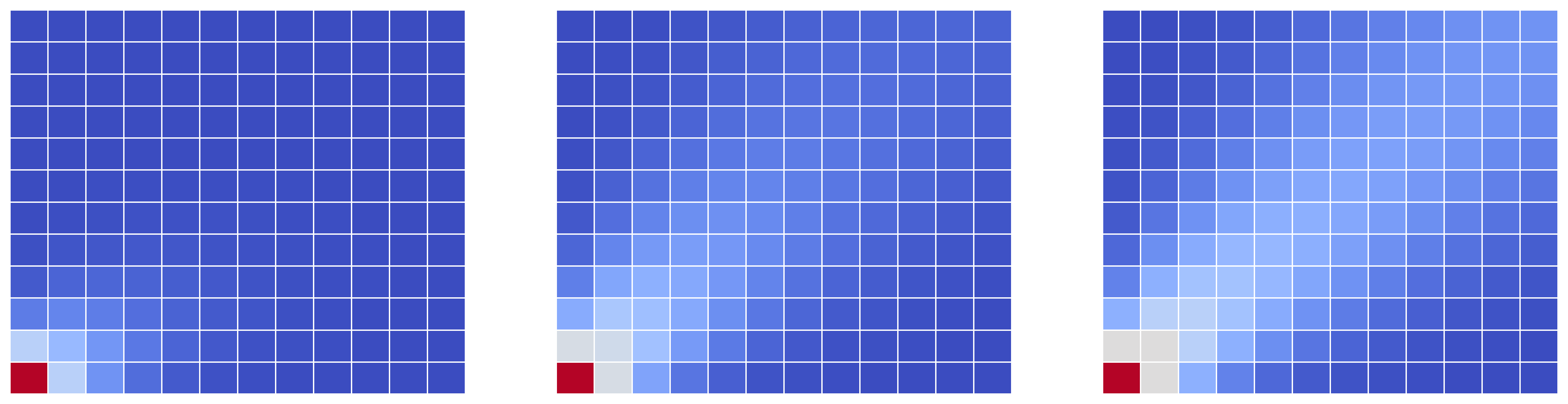} \\
        \includegraphics[width=1\linewidth,height=0.07\textheight]{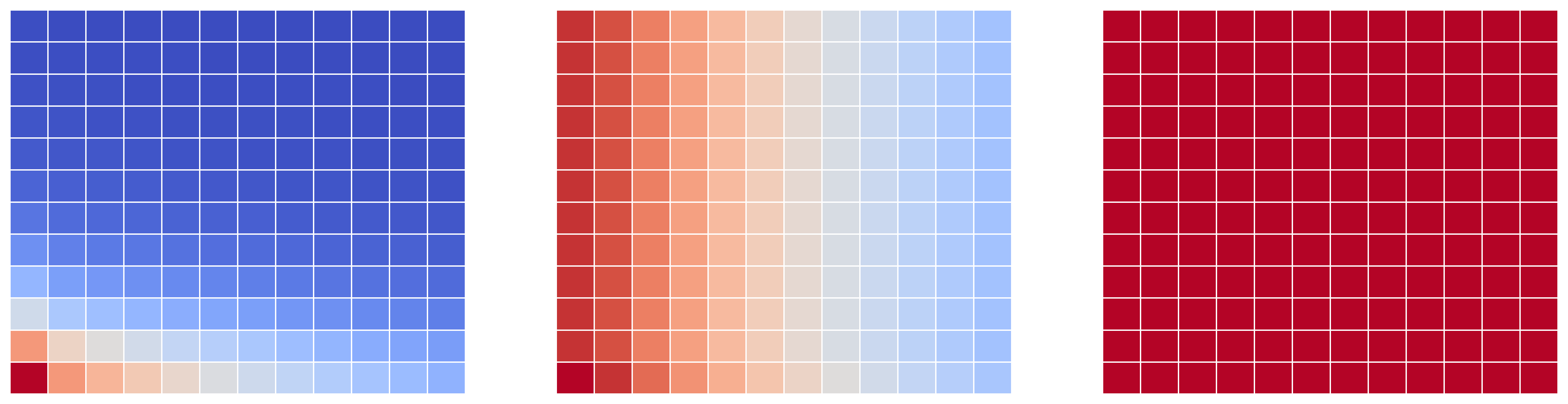} 
      \end{tabular}
      \caption{The kernels before and after the modifications of Sec.~\ref{par:relaxation}. Each column is created by the same $A_1...A_4,B_1, B_2, C_1, C_2 \in \mathbb{R}$ parameters. The first row is the normalized 2-D SSM formulation explained in \ref{eq:reqRule}, the second is the outcome of Eq.~\ref{eq:reqRule_normalized} and performing Eq.~\ref{eq:multi_by_2}, which is the kernel formulation we use. The bottom left corner of each heatmap is $K_{0,0}$. The figures demonstrate that before the relaxation, the kernels displayed a diagonal tendency while afterward, they exhibited a more diverse and versatile pattern.
      \label{Fig:kernalsCorrectiom}}
    \end{minipage}
  \hfill%
  \label{fig:both}
\end{figure}

\subsection{Efficient and Stable Parameterization}
\label{Parameterization}
\noindent{\bf Parameter diagonalization\enspace}
Computing $K$ is difficult for two reasons. Firstly, the number of elements in each $k_{i,j}$ is exponential in $i$ and $j$, since it is equivalent to the number of paths from $(0,0)$ to $(i,j)$. Secondly, calculating the powers of non-diagonal matrices $A_1, A_2, A_3, A_4$ becomes costly when $L_1, L_2$ are large. To overcome these challenges, we parameterized the system matrices $A_1, A_2, A_3, A_4$ as diagonal. This allows for efficient summation of at most $2L_{max}$ elements in $k^{h}_{i,j}$. Although diagonalization limits the expressiveness of our SSM, previous works have shown its effectiveness in one-dimensional cases~\cite{gu2022parameterization,gupta2022diagonal,gss,gupta2022simplifying}.

\noindent{\bf Limiting the parameters\enspace}\label{limiting_paragraph} $A_i \in \mathbb{R}^{N \times N}$ is now a diagonal matrix. We'll denote the eigenvalues of $A_i$ by  $(\alpha_1,\alpha_2...\alpha_N)  := \mathbf{diag}(A_i)$. Each $\alpha_i$ value behaves separately until the computation of K (Eq. \ref{eq:K_manifest}). Thus, for $\alpha_i > 1 $, $ \lim_{z\to\infty} \alpha^{z} = \infty$. 
Therefore, we limit $\alpha_i \in [0,1]$ by parameterized it by $\alpha_i := \textit{sigmoid}(\hat{\alpha_i})$ and optimizing $\hat{\alpha_i}$ instead.

\noindent{\bf Normalization\enspace} There is a scaling problem that arises from Eq. \ref{eq:rec2}. Since $k^{h}_{1,1} = A_1A_3B_1+A_1A_4B_2$, even if $ A_1A_3B_1, A_1A_4B_2  \leq 1$ ,$k^{h}_{1,1}$ can be greater than 1. The same behavior makes the kernel explode.

We would like to keep $\forall{i,j} :0 \leq k^h_{i,j} \leq 1$ and thus we employ a straightforward normalization mechanism, in which we divide by two each time we compute Eq. \ref{eq:reqRule}. This Eq. is thus replaced by:

\begin{equation}
\centering
\label{eq:reqRule_normalized}
\begin{bmatrix}
{x^h}_{i+1, j}  \\
{x^v}_{i, j+1} 
\end{bmatrix} = 
0.5
\begin{bmatrix}
A_1
A_2 \\
A_3
A_4
\end{bmatrix}
\begin{bmatrix}
{x^h}_{i, j}\\
{x^v}_{i, j}
\end{bmatrix}
+
\begin{bmatrix}
{B}_{1}\\
{B}_{2}
\end{bmatrix} u_{i, j}
\end{equation}

\noindent{\bf Relaxation of the kernel\quad}\label{par:relaxation} When imposing a "divide by two" normalization for every $k_{i,j}$, we impose a kernel formulation that is much more biased towards modeling diagonal kernels, i.e., if $|i-j|$ is small, $k^h_{i,j}$ is much larger than when $|i-j|$ is large.


Thus, we relax the normalization in the first row and column as follows. When calculating $k^h_{i,0} ,k^h_{0,j}$ we use Eq. \ref{eq:reqRule}. Additionally, for $K_{i,0}$, $K_{0,j}$, we use $\hat{C_1} = 2C_1, \hat{C_2}=2C_2$ in the following manner:
\begin{equation}
\label{eq:multi_by_2}
    K_{0,j} = \hat{C_1}k^h_{0,j} + \hat{C_2}k^v_{0,j} 
\end{equation}

{Figure \ref{Fig:kernalsCorrectiom} illustrates examples of different kernels before and after the relaxation.}



We note that these modifications are straightforward and probably not optimal. The development of optimal normalization according to the kernel formulation is an interesting direction for future work.

\subsection{Computation and Complexity} 
\label{subsection:Computation}
\noindent{\bf Training Complexity\enspace} The training process has two parts. First, we calculate the kernel K. The calculation itself is explained thoroughly in Appendix \ref{appendix:complexity} and has a time complexity of $O(L_{tot}L_{max}N)$, which is not dependent on B, and it is much faster than naive computation thanks to several design choices, such as parameters diagonalization, pre-processing, and a sophisticated caching procedure.

Next, we apply a convolution between $K$ and $U$, using the classical procedure of FFT, element-wise multiplication, and inverse FFT. The complexity of this step is $O(BL_{tot}\log(L_{tot}))$ where $B$ is the batch size. Hence, the total complexity is:
\begin{equation}
O(L_{max}NL +B\log(L_{tot})L_{tot})
\end{equation}

\noindent{\bf Inference Complexity\enspace}
During inference, the 2-D SSM can pre-compute the convolution kernels, resulting in an additional computation cost of only the 2-dimensional convolution of the layer input and the signal, which takes $L_{tot} \log (L_{tot})$ operations. Therefore, the additional computation overhead relative to the vanilla ViT is minimal, as the quadratic complexity dominates the overall complexity.

\label{subsection:complexSSM}
\subsection{Complex and Real SSMs}
While most SSM-based deep learning models, e.g S4 \cite{gu2021s4} or DLR~\cite{gupta2022simplifying}, use a complex SSM, MEGA used EMA, which can be interpreted as a restriction of the diagonal SSM to real numbers. Our 2-D SSM layer can be built over real or complex parametrization. 
The complex-SSM based model that we examined is described in detail in Appendix \ref{subsec:complexModelDeatils}. 
 

\section{Model Analysis}
\label{sec:modelAnlysis}

In this section, we study the expressiveness of the 2-D SSM layer, as well as its unique spatial inductive bias.

\subsection{Expressiveness}
We compare the expressiveness of our 2-D SSM layer with S4ND \cite{nguyen2022s4nd}, a very recent layer that is also based on multidimensional multi-axes SSM. We first introduce the key differences between our layer and S4ND, and then demonstrate the expressiveness gap. 

\noindent{\bf The relationship between 2-D SSM and S4ND\enspace} The main difference between S$4$ND and $2$-D SSM is that S$4$ND runs a standard 1-D SSM over each axis independently, 
and those functions are combined to form a global kernel. In contrast, our model learns multi-dimensional functions over multi-axes data directly. This difference arises from the fact that the $2$-D SSM has additional system matrices, $A_2, A_3$, which aggregate and process information from different axes. 

\noindent{\bf 2D-SSM is a generalization of S4ND\enspace} When restricted to 2-dimensional space, given a 2-D SSM model, S4ND is obtained by restricting $A_2, A_3$ to be zeros, and setting $A_1, A_4$ as the system matrices of S4ND. Additionally, to replicate S4ND in 2D-SSM, one should initialize the states 
of the 2D-SSM by the kernels factorized from S4ND.

\noindent{\bf Tensor rank as a criterion for expressiveness\enspace}
Over the years, several criteria were proposed for measuring the expressiveness of neural and classical models, such as VC-dimension~\cite{vcdim}, norms, and Rademacher complexity~\cite{bartlett2002rademacher}. Inspired by ~\cite{cohen2016expressive}, we employ tensor rank as our measure, and prove the followings theorems:

\begin{theorem}
\label{theorem:2dssm-full-kernals-dec}
  The $8$ parameters of the $2$-D SSM can express full-rank kernels 
\end{theorem}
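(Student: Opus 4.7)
My plan is to prove Theorem~\ref{theorem:2dssm-full-kernals-dec} by exhibiting an explicit choice of the eight scalar parameters that makes the resulting $L \times L$ kernel $K$ a diagonal matrix with nowhere-zero diagonal, which is trivially of full rank $L$.

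Work in the scalar state setting $N = 1$ and set $A_1 = A_4 = 0$, $B_1 = C_1 = 1$, $B_2 = C_2 = 0$, with $A_2, A_3$ arbitrary nonzero scalars. Using the path-sum expansion developed in Section~\ref{section: methodMain}, the coefficient of $u_{0,0}$ in $x^h_{i,j}$ equals a weighted sum over lattice paths from $(0,0)$ to $(i,j)$ in the Roesser lattice, with the weight of each path being the product of the $A$-factors it traverses. The choice $A_1 = A_4 = 0$ annihilates every path that ever keeps its state unchanged, so only paths whose state strictly alternates between $x^h$ and $x^v$ survive. Since $B_1 = 1, B_2 = 0$ forces the path to start in state $x^h$, and since an $A_3$-step (switching $x^h \to x^v$) advances $j$ while an $A_2$-step (switching $x^v \to x^h$) advances $i$, strict alternation beginning with an $A_3$-step forces equal numbers of $i$- and $j$-advances. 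Therefore $x^h_{i,j} = 0$ for $i \neq j$, while $x^h_{n,n} = (A_2 A_3)^n$ comes from the unique alternating path of length $2n$. Combined with $y_{i,j} = C_1 x^h_{i,j} + C_2 x^v_{i,j} = x^h_{i,j}$, this gives
\[
K = \mathrm{diag}\!\bigl(1,\; A_2 A_3,\; (A_2 A_3)^2,\; \ldots,\; (A_2 A_3)^{L-1}\bigr),
\]
which has rank $L$ whenever $A_2 A_3 \neq 0$.

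The main technical step is the path-counting argument above, which becomes clean once the Roesser lattice interpretation is made explicit. The primary obstacle I anticipate is reconciling the construction with the parameter restrictions of Section~\ref{Parameterization}: the sigmoid parameterization forces $A_1, A_4 \in (0,1)$ strictly, and the $\tfrac{1}{2}$ factor in Eq.~\ref{eq:reqRule_normalized} together with the boundary doubling of Eq.~\ref{eq:multi_by_2} rescale entries of $K$. Neither concern changes the rank: the normalization multiplies each diagonal entry by a nonzero constant, preserving the diagonal support; and because full rank is an open condition, perturbing to $A_1 = A_4 = \epsilon > 0$ arbitrarily small leaves $\det K$ nonzero by continuity of the determinant as a polynomial in the eight parameters. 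Hence the eight 2D-SSM parameters can indeed express a full-rank kernel, in sharp contrast to the rank-one outer-product kernels produced by S4ND.
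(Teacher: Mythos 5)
Your proof is correct, but it takes a genuinely different route from the paper's. The paper's proof (Appendix~\ref{theorem:proof-expresinvess-2dssm}) sets $A_1=A_2=A_3=1$, $A_4=0$, $B_1=C_1=1$, $B_2=C_2=0$, so that the recurrence collapses to $y_{i,j}=y_{i,j-1}+y_{i-1,j-1}+u_{i,j}$ and the kernel becomes a \emph{triangular} matrix with unit diagonal (a rotated Pascal's triangle), whose full rank follows from triangularity, argued by induction along the diagonals. You instead kill the ``stay-in-state'' transitions by setting $A_1=A_4=0$, so that only strictly alternating $h\to v\to h\to\cdots$ paths survive; since the two switch types advance different axes and the input enters only the horizontal state, the surviving path to each offset is unique and exists only on the main diagonal, yielding the explicitly \emph{diagonal} kernel $\mathrm{diag}\bigl(1,A_2A_3,\ldots,(A_2A_3)^{L-1}\bigr)$. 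Both constructions are valid witnesses for the theorem; yours gives a closed form for every kernel entry rather than an inductive triangularity claim, and it is arguably the cleaner certificate of full rank. You also go further than the paper on two points it leaves implicit: you check that the $\tfrac12$ normalization of Eq.~\ref{eq:reqRule_normalized} and the boundary rescaling of Eq.~\ref{eq:multi_by_2} only multiply entries by nonzero constants (preserving the diagonal support), and you note that the sigmoid parameterization excludes the exact value $A_1=A_4=0$ but that full rank is an open condition in the eight parameters, so a small perturbation suffices. The paper's own choice ($A_1=A_2=A_3=1$, $A_4=0$) is equally outside the open range of the sigmoid and is not accompanied by such a continuity argument, so your treatment is, if anything, the more complete of the two.
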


\begin{theorem}
\label{theorem:s4nd-1d-kernals-dec}
  S$4$ND can express kernels of rank $1$ only.
\end{theorem}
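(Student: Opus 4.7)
The plan is to prove the theorem directly from the architectural definition of S4ND recalled in the \emph{Other related work} section. First, I would write out explicitly that for $2$-D signals, S4ND produces its convolutional kernel $K \in \mathbb{R}^{L_1 \times L_2}$ as the outer product of two $1$-D kernels $k^{(1)} \in \mathbb{R}^{L_1}$ and $k^{(2)} \in \mathbb{R}^{L_2}$, where each $k^{(g)}$ is generated by an independent $1$-D S$4$ instance acting on axis $g$. Symbolically, $K_{i,j} = k^{(1)}_{i} \cdot k^{(2)}_{j}$, or equivalently $K = k^{(1)} \bigl(k^{(2)}\bigr)^{\top}$.

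Next, I would invoke the elementary fact that any matrix of the form $u v^{\top}$ with $u \in \mathbb{R}^{m}$, $v \in \mathbb{R}^{n}$ has matrix rank at most $1$, since every row of $u v^{\top}$ is a scalar multiple of $v^{\top}$, so the row space is at most one-dimensional. Applied to $K = k^{(1)} \bigl(k^{(2)}\bigr)^{\top}$ this gives $\mathrm{rank}(K) \leq 1$, with equality precisely when both $k^{(1)}$ and $k^{(2)}$ are nonzero.

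Because this bound holds for every choice of the parameters of the two underlying S$4$ instances, the family of kernels expressible by S$4$ND is contained in the set of rank-$\leq 1$ matrices, which establishes the theorem. Combined with Theorem \ref{theorem:2dssm-full-kernals-dec}, this yields the strict expressiveness gap: $2$-D SSM kernels can attain any rank up to $\min(L_1, L_2)$, while S$4$ND is confined to the rank-$1$ slice.

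There is essentially no technical obstacle here: the claim is immediate once the factorized (outer-product) form of the S$4$ND kernel is written down. The only care needed is in being precise about what rank means, namely the standard matrix rank of the $2$-D kernel tensor, and in noting that the identical argument generalizes to tensor rank for $M$-axis S$4$ND, although the paper restricts attention to the $2$-D setting where matrix rank suffices.
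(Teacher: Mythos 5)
Your proposal is correct and follows essentially the same route as the paper: the global S4ND kernel is the outer product of the per-axis $1$-D kernels, and an outer product of vectors has rank at most $1$. Your added precision (rank $\leq 1$, with equality exactly when both factors are nonzero) is a minor refinement of the paper's statement that $\mathrm{rank}(K)=1$, not a different argument.
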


\noindent{\bf Assumptions\enspace} For simplicity, we assume that both the $2$-D SSM and S$4$ND layers contain one channel and one hidden dimension. In this case, the SSM matrices are scalars.
%
When the assumption about the number of channels is omitted,
the rank of kernels that S4ND can express increases from $1$ to $N$. However, this comes with the cost of $MNr$ additional parameters, where $M$ is the number of axes and $r$ is the required rank for S4ND. It should be noted that the authors of S4ND did not evaluate the performance of models with $r > 1$.

Under these assumptions, the proof of Theorem \ref{theorem:2dssm-full-kernals-dec} is specified in Appendix \ref{theorem:2dssm-full-kernals}.
The proof of \ref{theorem:s4nd-1d-kernals-dec} is 
trivial, and derives from the fact that to compute a global multi-axis kernel $K$, S4ND takes the outer product operation on the per-axis kernels $k_m \in \mathbf{C}^{L_m\times 1}$ for all $m \in [M] $. Since each kernel is a vector, it is clear that: %
\begin{align}
    \textbf{rank}(K) = \textbf{rank}(k_1 \otimes k_2 \otimes \hdots k_M) = 1
\end{align}

\subsection{Image-Specific Inductive Bias}

\noindent{\bf Two-dimensional Position-Awareness\enspace} Our layer is grounded by a two-dimensional linear recurrent (Eq. \ref{eq:reqRule} ,Fig. \ref{fig:mainFig},left), as a result, positional information is taken into account by design when the kernel K is computed from the parameters $A, B, C,$ and $D$. This is a unique property without a counterpart in other modern layers. For instance, transformers lack positional bias and rely on additional positional encoding, while convolutional layers do not inherently encode explicit positional information; however, they learn to extract features based on local spatial dependencies.

Furthermore, as can be seen in Sec.~\ref{paragraph:runwithoutPE}, our method is highly effective in inserting positional bias into the transformer, even outperforming positional encoding in some cases. %

 %
\noindent{\bf Controllable Unrestricted Spatial Context\enspace} 
A significant limitation of both CNNs and transformers is the need to choose an appropriate patch size, which can result in a loss of global or local context. In contrast, our layer implements a controllable global convolution, which benefits from a global context, and can control the effective receptive field and efficiently capture spatial local dependencies. Moreover, our layer is not confined to patch-like dependencies and can effectively capture diverse local and global features in horizontal, vertical, or diagonal directions (See \ref{fig:mainFig}, middle).
\noindent{\bf Modeling Symmetries\enspace} 
Similar to other CNNs, our layer exhibits translation equivariance as the kernels slide across the input during computation. Additionally, as detailed in Appendix \ref{sec:modelExtension}, our layer's core extends over multiple directions, which allows it to accommodate rotations and reflections naturally.
%

\noindent{\bf Parameter Efficiency\enspace}
In contrast to CNNs, which parameterize filters of size $H \times W$ with at least $HW$ parameters, our layer has a fixed and small number of parameters ($9$ parameters per channel, $A_1,A_2,A_3,A_4,B_1,B_2,C_1,C_2,D \in  \mathbb{R}$), however, those parameters can be expanded into unbounded two-dimensional kernels. Furthermore, we use parameter sharing for the SSM parameterization across channels, similarly to CNNs or state-space layers and donate $n_{ssm}$ as the number of non-shared channels.




\section{Experiments}
\label{sec:experiments}


We assess our 2-D SSM layer as an inductive bias within various ViT-based backbones. We demonstrate the universality of our method by incorporating it into various backbones, such as ViT, DeiT, Swin, and report improved results over the baselines on ImageNet-1k, Celeb-A, Tiny-ImageNet and CIFAR-100, with negligible additional parameters and without hyperparameter tuning, except for stochastic depth. {For a comprehensive overview of the experimental setup, please see Appendix \ref{sec:Experimentalsetup}}

\begin{table}[b]
\begin{minipage}[t]{0.62\linewidth}
\caption{Results using ViT, MEGA and Swin backbones on the Tiny ImageNet (T-IN) and  CIFAR-100 (C100) datasets. No hyperparameter tuning except for stochastic depth.}
\label{tab:Small Dataset}
\small
\centering
\begin{tabular}{@{}l@{~}c@{~}c@{~}c@{~}c@{}}
\toprule
Model & C100 & T-IN & Train Time  & \# Of Parameters  \\
\midrule
ViT             & 73.81 &	57.07 & 1x    & 	2.71M (1x)       \\     
ViT w/ S4ND.     & 72.60 &	56.10 &  2.22x   & 	2.72M (1.003x)       \\ 
ViT w/ SSM-r.     & \textbf{74.07} &	57.56 &  2.66x   & 	2.72M (1.003)      \\ 
ViT w/ SSM-c.     & 73.91 &	\textbf{57.66} &  2.89x   & 	2.73M (1.01x)       \\ 
\midrule
Mega-ablate     & 74.82 &	56.43 & 1.1x  & 	2.75M (1x)   \\    
Mega        & 72.27 &	54.49 & 1.28x & 	2.98M (1.08xx)     \\   
Mega w/ S4ND    & 74.9 &	56.65 & 1.46x & 	2.80M (1.02x)    \\    
Mega 2-D SSM-r    & \textbf{76.02} &	\textbf{57.95} & 2.03x & 	2.80M (1.02x)    \\   
Mega 2-D SSM-c    & 75.09 &	56.51 & 2.03x & 	2.84M (1.03x)    \\   
\midrule
Swin             &  76.87 &  60.87  & 1x         & 	7.15M (1x)      \\
Swin-reprod.     & 	77.98 &  61.29  &  1x        &  7.15M (1x)      \\
Swin w/ EMA      &	77.01 &  60.13  & 1.39x      & 	7.52M (1.05x)   \\
Swin w/ S4ND      & 	79.26 &  64.6  & 1.29x      & 	7.18M (1.004x)    \\
Swin w/ SSM-r      & 	\textbf{80.12} &  \textbf{65.77}  & 2.16x      & 	7.25M (1.01x)   \\
Swin w/ SSM-c      & 	3.28 &  12.76  & 2.18x      & 	7.26M (1.02x)   \\
\bottomrule
\end{tabular}
\end{minipage}
\hfill
\begin{minipage}[t]{0.35\linewidth}
\caption{Results for DeiT and Swin of different sizes on the Celeb-A dataset.}
\label{tab:CelebA}
\small
\centering
\begin{tabular}{lccc}
\toprule
Model & Top 1  &  \#Param\\
\midrule
DeiT-T & 88.43 & 5.532M \\
DeiT-T w. SSM-r & 89.76 & 5.537M \\
DeiT-T w. SSM-c & \textbf{89.84} & 5.541M \\
\midrule
DeiT-S & 89.66 & 21.681M \\
DeiT-S w. SSM-r & 90.24 & 21.688M \\
DeiT-S w. SSM-c & \textbf{90.38} & 21.691M \\
\midrule
DeiT-B & 90.13 & 85.829M \\
DeiT-B w. SSM-r & 90.45 & 85.841M \\
DeiT-B w. SSM-c & \textbf{90.73} & 85.845M \\
\midrule
Swin-T & 91.48 & 27.550M \\
Swin-T w. SSM-r & 91.68 & 27.556M \\
Swin-T w. SSM-c & \textbf{91.78} & 27.558M\\
\bottomrule
\end{tabular}
\end{minipage}
\end{table}

\noindent{\bf Swin, DeiT and ViT\enspace} We adopted a straightforward approach to incorporate our 2-D SSM layer into the aforementioned backbone structures. Prior to the Transformer Block, we apply the 2-D SSM to the input signal $u \in R^{L_1 \times L_2 \times D}$, as illustrated in Fig.~\ref{fig:mainFig}.
We highlight that in the case of Swin Transformer, the 2-D SSM operates on the patch level and not the window level and therefore injects additional 2-D bias between windows.

We tested \textbf{Swin and ViT} over the small datasets Tiny-ImageNet and CIFAR-100, using the results reported by \cite{lee2021vision} as baseline. As shown in Tab. \ref{tab:Small Dataset}, with the ViT backbone, we improve $0.8\%$ on CIFAR-100 and $0.5\%$ on Tiny-ImageNet, and with the Swin backbone, we improve $3.25\%$ on CIFAR-100 and $~4.25\%$ on Tiny ImageNet. 

The \textbf{DeiT {and Swin}} backbones were tested on the large-scale Celeb-A dataset~\cite{liu2015faceattributes}. This dataset involves a 40-way multi-label attribute classification. We report aggregate accuracy across all 40 tasks. As can be seen in Tab.~\ref{tab:CelebA}, the complex version outperforms the real in all experiments and achieves $1.41\%$, $0.72\%$, $0.6\% $, and $0.3\%$  improvements over the baseline for DeiT sizes Tiny, Small, and Base, and Swin-Tiny respectively.
\begin{figure}[htbp]
  \captionsetup{type=table} 

  \begin{minipage}[t]{0.36\linewidth}
  \small
    \centering
    \small
\caption{Accuracy on the CIFAR-10 grayscale classification task, which is part of the Long Range Arena.}

\label{tab:lra}
\centering
\begin{tabular}{lc}
\toprule
\textbf{Models} & \textbf{Image - LRA} \\
\midrule
Transformer~\cite{vaswani2017attention} & 42.94 \\
\midrule
S4-v1 &  87.26 \\
S4-v2  & 88.65 \\
CCNN-1D~\cite{knigge2023modelling} & 88.90 \\
CCNN-2D~\cite{knigge2023modelling} & 91.12 \\
S4ND~\cite{nguyen2022s4nd} & 89.90 \\
Hyena~\cite{poli2023hyena} & 91.20 \\
Mega Ablate &  81.00 \\
Mega  &  90.44 \\
\textsc{Mega 2-D SSM} & \textbf{91.31}\\ 
\bottomrule
\end{tabular}
  \end{minipage}\hfill
  \begin{minipage}[t]{0.58\linewidth}
    \small
    \centering
    \caption{ImageNet-1K accuracy of MEGA variants.} 
\label{tab:megaimagenet}

\small
\centering
\centering
\begin{tabular}{lccc}
\toprule
Model & Top 1  & Top 5 &  $\#$ of Parameters \\
\midrule
MEGA-ablate & 66.97 & 88.17 & 5.91M \\
EMA & 69.73 & 89.76 & 6.21M \\
\textsc{2D-SSM-r} & \textbf{70.11} & \textbf{90.19} & 5.96M\\
\bottomrule
\end{tabular}
      \centering
  \includegraphics[height = .4\linewidth,width = .856\linewidth]{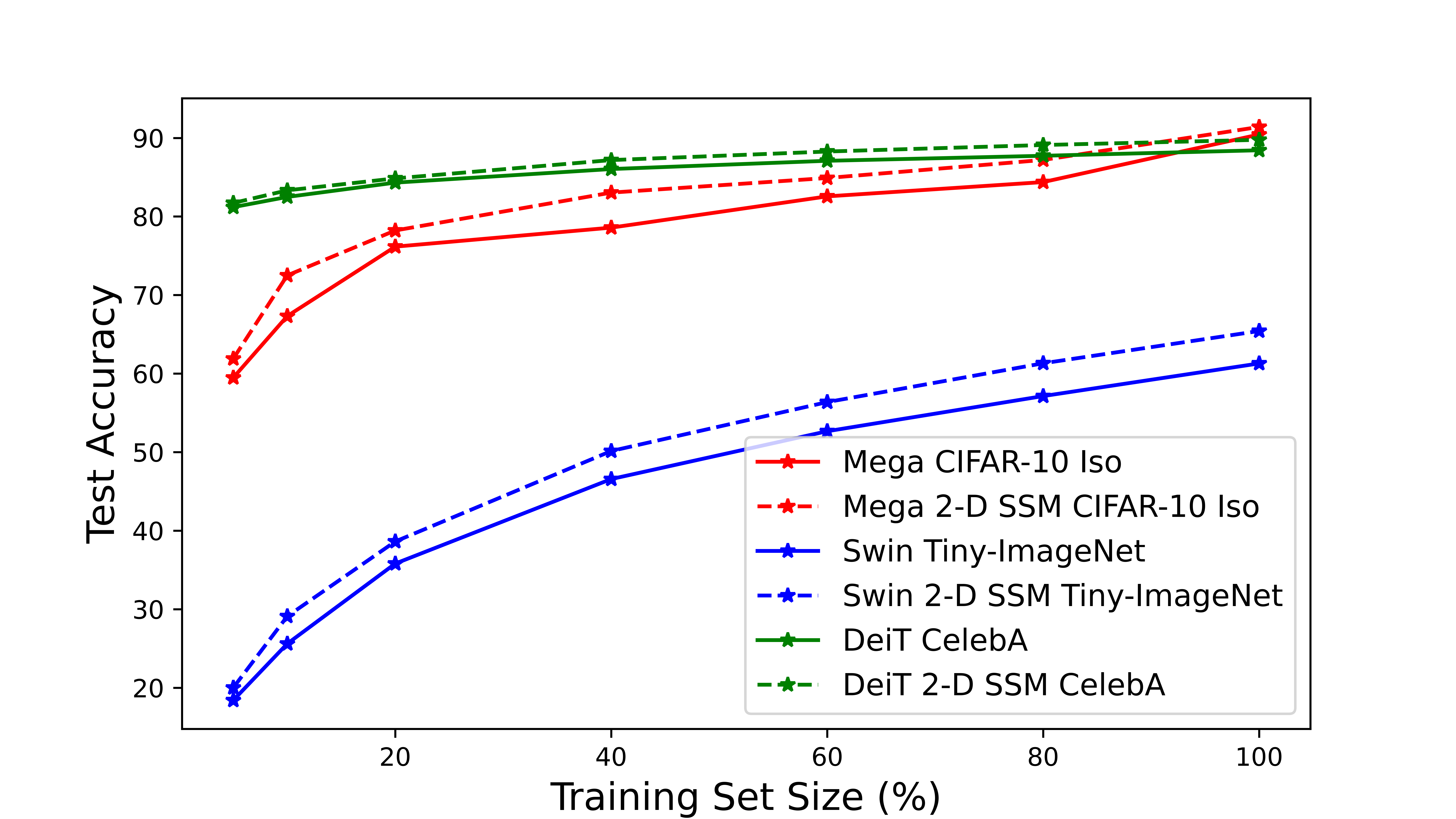}
  \caption*{Figure 3: The effect of the training set size.}
\refstepcounter{figure}\label{fig:x cubed graph}  \end{minipage}
\end{figure}

 \noindent{\bf Mega\enspace} { In Mega, we replace the EMA mechanism with 2-D SSM, which means that we only perform our layer on $Q,K$. We compare original Mega (with EMA) vs Mega-ablate (without EMA) and Mega 2-D SSM. We examined our model on CIFAR-10 Grayscale in an isotropic manner (without decreasing the image size along the architecture, and without patches}), which is part of the Long Range Arena benchmark \cite{tay2020long}. As shown in Tab.~\ref{tab:lra}, we improved the result by almost 1\% over MEGA, obtaining state-of-the-art results. 
We also conduct an experiment on the ImageNet-1K dataset \cite{deng2009imagenet}, and as shown in Tab. \ref{tab:megaimagenet}, we improve over MEGA's ViT-T results by $\sim 0.4\%$ in both Top 1 accuracy and Top 5 accuracy.  Finally, we check other small datasets (Tab. \ref{tab:Small Dataset}) and find superior results for combining Mega with 2-D SSM over baseline or other methods.

\noindent{\bf Comparisons against S4ND layer\enspace} S4ND is the only N-Dimensional SSM-based layer known to us; we compare our results against it by substituting SSM in ViT, Mega, and Swin. We conducted experiments on CIFAR100 and Tiny Imagenet. As indicated in Tab.~\ref{tab:Small Dataset}, S4ND performs very poorly when integrated into the original ViT backbone on CIFAR-100 (lower than the baseline by $1.21\%$) and sub-optimally when integrated into Swin and MEGA (achieves $~1\%$ or more lower accuracy on CIFAR-100 and Tiny-ImageNet for both backbones).


\noindent{\bf Sample Complexity\enspace} We examine the behavior of our model with different backbones on different datasets over the baseline. As can be seen in Fig.~\ref{fig:x cubed graph}, 2-D SSM maintains improved results over the baseline for all backbones, which shows the data-efficient quality of our model.


\label{paragraph:runwithoutPE}

{\bf Removing the positional encoding (PE)\quad} We compare the empirical results obtained with real vs. complex kernels, with and without PE in Tab.~\ref{tab:PEcomplex}. Evidently, complex-SSM can be superior or inferior to real-based SSM, depending on the scenario. Additionally, our findings indicate that complex-SSM has a tendency to exhibit instability during training, which can result in poor performance. Stabilizing these models is an important direction for future research.

Running ViT backbones without PE decreases performance dramatically. In contrast, when our 2D-SSM layer is inserted into these backbones, they benefit from PE, and even without PE they outperform the original backbones with PE. {These findings support a innovative approach to introducing positional bias in ViT: rather than encoding positional information directly into the representation, it can be integrated into the computation by incorporating positional-dependent operators.}


\begin{table}[h]
\caption{Ablations. For each model and dataset, we examine the effect of using original positional encoding and complex (C) vs. real (R) SSM. {The column $\delta$ represents the average difference for models with and without PE. As can be seen, our models are much more resistant to PE removal. 
\label{tab:PEcomplex}}}
\centering
\small
\begin{tabular}{@{}l@{~}c@{~}cc@{~}cc@{~}cc@{~}cc@{~}c@{}}
\toprule
\hfill Dataset: & \multicolumn{2}{c}{{Tiny-INet} (Swin)} & \multicolumn{2}{c}{CIFAR100 (ViT)} & \multicolumn{2}{c}{CelebA (DeiT-T)} & \multicolumn{2}{c}{CIFAR10 (Mega-ISO)}  & \multicolumn{1}{c}{Avg.}\\
\cmidrule(r){2-3}
\cmidrule(lr){4-5}
\cmidrule(lr){6-7}
\cmidrule(l){8-9}
\cmidrule(l){10-10}
Model & with PE & w/o PE & with PE & w/o PE & with PE & w/o PE & with PE & w/o PE  &  $\delta$ \\
\midrule
Baseline  &  61.29 & 58.97 & 73.26 & 64.09&  88.43 &  87.99 & 90.44 & 75.21 & -6.79  \\
\midrule
+Ours (R)   &  \textbf{65.77} & 65.44& 74.07 & \textbf{74.89}  &  89.76 & 89.63&  \textbf{91.31} & 90.68 & -0.07\\
+Ours (C) &  3.28 & 2.16 & 73.91 & 74.67   & \textbf{89.84} &  89.83  &  90.46 & 90.79 & \textbf{-0.01} \\
\bottomrule
\end{tabular}
\end{table}

\section{Limitations}
Despite the promising results presented in this paper, there are several limitations that should be considered. First, the current implementation of our proposed layer has relatively slow training times, as shown by the wall-clock measurements presented in the Experiments section \ref{sec:experiments}. This slow training time may be even more pronounced when applying our method to a longer two-dimensional sequence of patches, which could limit its applicability to tasks that require handling multi-dimensional long-range dependencies. One possible approach to mitigating this challenge is to use multi-dimensional parallel scanners, which could potentially reduce the training time of our layer. The main idea is to extend the work of S5 \cite{smith2022simplified}, which leverages 1-D parallel scanners to apply SSM on 1-D sequences to multi-dimensional parallel scanners and multi-dimensional sequences.

\section{Conclusions}
We present a novel spatial SSM-based layer that is more general than existing ones, encodes positional information by design, and is able to model spatial relations more expressively than other SSMs, including S4ND. When added to various ViT backbones, it is able to improve classification results on the various benchmarks without optimizing any aspect or other parts of the architecture. In future work, we would like to study the behavior of the layer in the context of spatial vision tasks, such as video processing, image segmentation, phrase grounding, and image inpainting. In the last task, the recursive view of the layer could be applied directly to impute missing pixels efficiently.

\section{Acknowledgments}
This work was supported by a grant from the Tel Aviv University Center for AI and Data Science (TAD), and the Blavatnik Family Foundation. The contribution of IZ is part of a Ph.D. thesis research conducted at Tel Aviv University.

\bibliographystyle{plain}
\bibliography{2dssm}



\newpage
\appendix
\onecolumn
\label{appendix:complexity}

\section{Computing the kernel}
We discuss $x^{h}_{i,j}$,$k^{h}_{i,j}$. The same calculations hold for $x^{v}_{i,j}$,$k^{v}_{i,j}$.

$k^h_{i,j}$ can be written as:
\begin{align}
\label{eq:simplePowers}
\forall{i,j} : k^{h}_{i,j} = \sum_{z}^{2 * L_{max}} c_{z} A_1^{z_1} A_2^{z_2} A_3^{z_3} A_4 ^{z_4} B_{z_5}
\end{align}
For brevity and since it is not material for the method, we limit our exposition of the different power combinations of the system matrices $A_1,A_2,A_3,A_4$ and the input matrices $B_1,B_2$. As noted above, for each $k^h_{i,j}$ there are at most $2L_{max}$ elements.

\noindent{\bf Pre-Processing\enspace} The problem of finding $c_z$ for each element in the summation is a generalization of Pascal's triangle. In order to calculate the kernel, we calculate all the coefficients and the powers of $A_1 ... A_4$ up to the size of $L_1 , L_2$ and cache them before the training process.

During training, we employ a matrix multiplication process to compute $k^h_{i,j}$ with the learned parameters $A_1,...,A_4,B_1,B_2$.

Thus, for each cell there are at most $2L_{max}$ elements, and for each element, we save a constant number of $\chi$ values (the values of $z$ and $c_z$). As a result, the size of the cached matrix is bounded by $\mathcal{O}(L_{tot}L_{max})$.

It should be noted that currently in our method we cache the coefficients as One Hot Encoding and not the coefficient itself, and thus in our specific implementation we need to multiply the time complexity and memory complexity by $L_{max}$.

\section{Time and Memory Complexity of Our Method}
\label{appendix:timeMemoryComplexity}
To understand the complexity of our method, we will outline each step of the computation, starting from calculating the cache, creating the kernel during training, and computing the output $Y$ afterward.

As before, for brevity, we will refer only to the $x^h_{i,j}$, $k^{h}_{i,j}$ matrices caching, but the same holds for the vertical matrices. For simplicity, we assume $H=1$ (number of channels).

\noindent{\bf Caching\enspace} As noted in Section \ref{subsection:Computation}, for each cell $k^h_{i,j}$ in the horizontal kernel there are at most $2L_{max}$ elements. Also as noted in Section \ref{subsection:Computation}, $z_1,z_2,z_3,z_4 <= 2L_{max}$. Thus,
for each element in Eq. \ref{eq:simplePowers} we save $z_1,z_2,z_3,z_4,z_5$ and $\alpha_z$ values. In total, for each cell, we save $\chi_1 L_{max}$ values, where $\chi_1$ is a small constant. We have $L_{tot}$ cells and thus the total coefficient cached tensor for calculating $K_h$ is sized 
\begin{equation}
\chi_1*L_{max}L_{tot}
\end{equation}
From now on we will denote the tensors of horizontal coefficients as $CACHE \in \mathbb{R}^{\chi_1 \times L_{tot} \times L_{max}}$. Notice that there is a $CACHE$ tensor for each parameter, meaning $CACHE^h_{A_{1}},CACHE^h_{A_{2}},CACHE^h_{A_{3}},CACHE^h_{A_{4}},CACHE^h_{B}$.

\noindent{\bf Creating the Kernel\enspace} 
For brevity, we use real-valued diagonal $A_i \in [0,1]^{N}$ (after the sigmoid).  
First, we calculate the Vandermonde Matrix for each $A_1,A_2,A_3,A_4$ eigenvalues up to the highest power that exists in the kernel, which is $2L_{max}$, and denote $VAN_i = Vandermonde(A_i) \in \mathbb{R}^{2L_{max} \times N}$. 

Again, we have $L_{tot}$ cells in the horizontal kernel, each cell having $2*L_{max}$ elements. We take $CACHE_{A_{i}}$ which holds for each element its $A_i$ power, $z_i$ and creates a matrix that holds for each element its corresponding $A^{z_i}_{i}$ value.
\begin{equation}
\label{eq:complexity1}
O_{A_{i}}^h =  VAN_i[CACHE^h_{A_{i}}] \in\mathbb{R}^{{L_{tot} \times 2L_{max} \times N}}
\end{equation}

Now we multiply the matrices element-wise to obtain the final value of each element:
\begin{equation}
\label{eq:complexity2}
O^h_{pre-addition} =  O_{A_{1}}^h \odot O_{A_{2}}^h \odot O_{A_{3}}^h \odot O_{A_{4}}^h \odot O_{B}^h \odot O_{\alpha}^h 
 \in\mathbb{R}^{{L_{tot} \times L_{2L_{max}} \times N}}
\end{equation}

where $\odot$ denotes element-wise multiplication.
Now for each cell, we sum all the elements in the summation $k_{i,j}$, meaning summing over the second dimension:
\begin{equation}
O^h_{post-addition} =  sum(O^h_{pre-addition},d=1) \in\mathbb{R}^{{L_{tot} \times N}}
\end{equation}

Again, all the above steps are employed for the vertical axis as well, thus we are finally able to compute the kernel by using $C_1,C_2 \in \mathbb{R}^{N \times 1}$:
\begin{equation}
K = O^h_{post-addition} C_1 + O^v_{post-addition} C_2 \in \mathbb{R}^{L_1 \times L_2}
\end{equation}

Remembering that we actually used $n_{ssm}$ channels, the kernel size is $K \in \mathbb{R}^{L_1 \times L_2 \times n_{ssm}} $. It should be noted here that the calculation of the kernel is not dependent on the batch size $B$. 

\noindent{\bf Forward Pass\quad}
Let B denote the batch size. We would like to convert input signal $U \in \mathbb{R}^{B \times L_1 \times \ L_2 \times H}$ to output signal $Y \in \mathbb{R}^{B \times L_1 \times \ L_2 \times H}$.  After calculating the kernel, we convert $U,K$ to the frequency domain through FFT:
\begin{equation}
    U_f = FFT(U), K_f = FFT(K)
\end{equation}
\begin{equation}
    Y = IFFT(U_f \odot K_f)
\end{equation}

This whole process costs us $O(BHL_{tot}\log(L_{tot}))$

Thus, our total forward pass time complexity is:
\begin{equation}
\mathcal{O}(\chi L_{TOT} L_{max} n_{ssm} N + BHL_{TOT}\log{L_{TOT}})
\end{equation}

\noindent{\bf Implementation detail\quad} We implemented the caching and matrix multiplication process with One Hot Encoding Vector of the powers and not by using floats representing the powers themselves. Thus, the size of each $COEFF^h_i$ in our implementation is multiplied by $L_{max}$, as is the time complexity of Eq. \ref{eq:complexity1} and \ref{eq:complexity2}.


\section{Expressiveness}
\label{theorem:proof-expresinvess-2dssm}
\begin{theorem}
\label{theorem:2dssm-full-kernals}
  One channel of $2$-D SSM can express full-rank kernels 
\end{theorem}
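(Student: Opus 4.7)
My plan is to prove the theorem by exhibiting an explicit choice of the eight scalar parameters $(A_1, A_2, A_3, A_4, B_1, B_2, C_1, C_2) \in \mathbb{R}^8$ for which the resulting kernel $K \in \mathbb{R}^{L_1 \times L_2}$ has full rank $\min(L_1, L_2)$. The guiding observation is that each entry $k^h_{i,j}$ (and $k^v_{i,j}$) is a weighted sum over monotone lattice paths from $(0,0)$ to $(i,j)$, where the $A_i$'s appear along each step according to its direction and to the state transition it performs (see Section~\ref{section: methodMain}). The matrices $A_2$ and $A_3$ are precisely what distinguishes the $2$-D SSM from S$4$ND: setting $A_2 = A_3 = 0$ decouples the two state recurrences into independent $1$-D SSMs and forces $K$ to factor as an outer product, recovering Theorem~\ref{theorem:s4nd-1d-kernals-dec} with rank one. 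Any route to full rank must therefore activate these cross-couplings.

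Concretely, I would work with the degeneration $A_1 = A_4 = 0$, $A_2 = a$, $A_3 = b$ with $ab \neq 0$, $B_1 = C_1 = 1$ and $B_2 = C_2 = 0$. Under this choice every path that contributes to $k^h_{i,j}$ must strictly alternate step types, because $A_1$ is the weight of a vertical step that stays within $x^h$ and $A_4$ is the weight of a horizontal step that stays within $x^v$, and both now vanish. Since the path starts at $x^h_{0,0}$ (as $B_2 = 0$) and ends at $x^h_{i,j}$ (as $C_2 = 0$), an alternating monotone lattice path with these endpoints exists only when $i = j$, in which case it is unique and carries weight $(ab)^i$. Hence $k^h_{i,j}$ equals $(ab)^i$ when $i = j$ and vanishes otherwise, so $K_{i,j} = C_1 k^h_{i,j}$ is diagonal with non-zero diagonal entries $1, ab, (ab)^2, \ldots$.

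A diagonal $L_1 \times L_2$ matrix whose diagonal entries are all non-zero has rank $\min(L_1, L_2)$, which is the conclusion we seek. The normalization factor $0.5$ of Eq.~\ref{eq:reqRule_normalized} and the boundary relaxation of Eq.~\ref{eq:multi_by_2} only rescale rows and columns of $K$ by non-zero constants, so the rank is preserved and the construction continues to work under the paper's specific parameterization. The sigmoid parameterization precludes $A_1 = A_4 = 0$ exactly, but full-rankness is a Zariski-open condition, so any sufficiently close tuple $(A_1, A_4) \to (0,0)$ also yields a full-rank kernel.

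The main obstacle I foresee is verifying the alternating-path claim in full detail: one must carefully account for the boundary conditions at $i = 0$ or $j = 0$, where $x^h_{0,j}$ and $x^v_{i,0}$ are prescribed directly from $B_1$ and $B_2$ rather than through the recurrence, and confirm that no extra contributions sneak in through this boundary. This is a short combinatorial bookkeeping exercise, but it is the one place where the exact conventions of Eq.~\ref{eq:reqRule} enter. Once the diagonal form of $K$ is established, the rank count is immediate and no algebraic-geometric machinery is required.
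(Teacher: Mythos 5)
Your proof is correct, but it reaches full rank by a different witness than the paper. The paper's proof sets $A_1=A_2=A_3=1$, $A_4=0$, $B_2=C_2=0$, which collapses the recurrence of Eq.~\ref{eq:reqRule} to $x^h_{i,j}=x^h_{i,j-1}+x^h_{i-1,j-1}+u_{i,j}$ and yields a \emph{triangular} kernel whose upper triangle is a rotated Pascal's triangle with unit diagonal; full rank then follows from triangularity. You instead kill the self-couplings ($A_1=A_4=0$) and keep only the cross-couplings $A_2=a$, $A_3=b$, so that the recurrence degenerates to $x^h_{i,j}=ab\,x^h_{i-1,j-1}+u_{i,j}$ and the kernel is \emph{diagonal} with entries $(ab)^t$. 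Both are legitimate one-line rank certificates once the kernel's shape is established, and your alternating-path argument (paths start and end in the $x^h$ state, so the numbers of $h\!\to\!v$ and $v\!\to\!h$ transitions coincide, forcing equal displacement in both axes) checks out against the recurrence. Your construction has two advantages worth noting: it isolates exactly the parameters $A_2,A_3$ that separate the $2$-D SSM from S4ND, making the contrast with Theorem~\ref{theorem:s4nd-1d-kernals-dec} structurally transparent rather than incidental; and you explicitly verify robustness to the paper's actual parameterization --- the $0.5$ normalization multiplies $K_{i,j}$ by $0.5^{i+j}$, a nonzero row-and-column rescaling, and the sigmoid constraint (which forbids $A_1=A_4=0$ exactly) is handled by openness of the full-rank condition. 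The paper's proof silently ignores both issues (its witness $A_1=A_2=A_3=1$ is likewise unreachable under the sigmoid), so your version is, if anything, the more complete argument; the one piece you flagged as remaining --- the boundary bookkeeping at $i=0$ or $j=0$ --- is immediate here because with $B_2=0$ no mass ever enters the $x^v$ state on the boundary, so $k^h_{i,0}=k^h_{0,j}=0$ for $i,j>0$ as required by diagonality.
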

\begin{proof} 
We start by restricting the system, output and input matrices:
\begin{align}
\label{eq:restriction1}
    A_1 = A_2 = A_3 = 1, \quad A_4 = 0, \quad C_1 = 1 , C_2 = 0, \quad  B_1 = 1 , B_2 = 0
\end{align}
For simplicity we assume that the values of the initial states are $0$: 
\begin{align}
\label{eq:restriction2}
    \forall i : i = -1 \rightarrow x_{i,j} = 0, \quad \forall j : j = -1 \rightarrow x_{i,j} = 0 
\end{align}
It suffices to show that (i) $K$ is a triangular matrix, and (ii) the diagonal of $K$ contains non-zero elements.
First, by plugging \ref{eq:restriction1},\ref{eq:restriction2} into the recurrent rule \ref{eq:reqRule}, it can be simplified:
\begin{align}
\label{reccurentRestrict}
   y_{i,j} = {x^h}_{i,j}, \quad {x^h}_{i,j} = {x^h}_{i,j-1} + {x^v}_{i,j-1} + u_{i,j}, \quad {x^v}_{i,j} = {x^h}_{i-1,j}
\end{align}
Given this simplification \ref{reccurentRestrict}, both (i) and (ii) can be easily proven by induction on the diagonals of $K$.



To provide more insight into the proof, Eq. \ref{kernelRestrication} illustrates the values of $K$.
\begin{equation}
\label{kernelRestrication}
  \begin{bmatrix}
    1 & 1 & 1 & 1 & 1   \\
    0 & 1  & 2  & 3 & 4  \\
     \vdots & \ddots & 1 & 3 & 6 \\
     \vdots & \ddots &  \ddots & 1  & 4 \\
      0 &  \hdots & \hdots & 0  & 1   \\
  \end{bmatrix}
\end{equation}

And in general, since its clear from \ref{reccurentRestrict} that
\begin{align}
    y_{i,j} = {x^h}_{i,j} = {y}_{i,j-1} + {y}_{i-1,j-1} + u_{i,j}, \quad \forall j \rightarrow k_{0,j} = 1 
\end{align}

It easy to understand that the upper triangular of  $K$ can obtained from a rotation of Pascal's triangle.

\end{proof}

\section{Model Extension}
\label{sec:modelExtension}
\subsection{Bidirectional SSM} Using the State Space model, when calculating $x^h_{i,j}$ one only considers ${x_{
\hat{i},\hat{j}}}$ where {$\hat{i}\leq i$, $\hat{j}\leq j$}. To benefit from bidirectionality, we employ a version where we transpose the kernel in two or four directions (shown in the ablation experiments) and then sum the results. A similar mechanism for the 1-D case is used in S4, MEGA, and elsewhere.

\subsection{Multi-Axis Multidimensional SSM}

To enrich the dependencies that can be captured by our kernels, we use a weighted linear combination of kernels. Under the assumption that the system matrices are diagonal, the $N$ coordinates of the states are not dependent on each other, and can thus be calculated independently. Specifically, Eq. \ref{eq:simpleConvH} can be re-written separately per coordinate : $\forall g \in [N]  $
\begin{align}
\label{eq:kernelPerCoordinate}
    {x^h}_{i,j}[g] &= \sum_{0 \leq \hat{i} \leq i}\sum_{ 0 \leq \hat{j} \leq j} {k^h}_{\bar{i},\hat{j}}[g] u_{\hat{i},\hat{j}} 
\end{align}
Therefore, increasing $N$ will increase the number of kernels that make up $K$. By using this structure, the kernel can capture a variety of dependencies, where each coordinate focuses on a different type of dependency. Furthermore, this extension adds relatively negligible runtime when working with large batches, since the batch dimension does not affect the complexity when the kernel is computed.  Therefore, increasing $N$ will increase the number of kernels that compose $K$. 

\section{Justify Design Choices}

\subsection{Our Complex 2D-SSM}
\label{subsec:complexModelDeatils}
As explained in Sec. \ref{subsection:complexSSM}, our models employ real rather than complex SSMs. For reproducibility, here we provide a
detailed description of our complex SSM variant:
The complex variant of our 2-D SSM model, still assumes  $\forall{t} , u_{i,j} \in \mathbb{R}^{1}$ , $\forall{t} , y_{i,j} \in \mathbb{R}^{1}$ and employs:
\begin{equation}
A_1 , A_2, A_3 ,A_4 \in \mathbb{C}^{N x N}, B_1,B_2 \in \mathbb{C}^{N x 1}, C_1,C_2 \in \mathbb{C}^{1 x N} 
\end{equation}
(diagonal matrices as above), and therefore
\begin{equation}
x^h_{i,j} \in \mathbb{C}^{N} , x^v_{i,j} \in \mathbb{C}^{N} 
\end{equation}

The output remains a real number $y_{i,j} \in \mathbb{R}^{1}$, and thus the real part of Eq. \ref{eq:reqRule} is used, namely $y^{out}_{i,j} = \operatorname{Re}(y_{i,j})$

For complex SSM, we save $\hat{A}_{i_{angle}} , \hat{A}_{i_{radius}} \in \mathbb{R}^{N \times N}$, and we calculate  $A_i \in \mathbb{C} ^{N \times N}$ in the following manner: 
\begin{equation}
A_{i_{angle}} = 2\pi sigmoid (\hat{A}_{i_{angle}}) ,\quad
A_{i_{radius}} = sigmoid(\hat{A}_{i_{radius}})
\end{equation}
\begin{equation}
A_i = A_{i_{radius}} *(cos(A_{i_{angle}}) + i*sin(A_{i_{angle}})) \in \mathbb{C}^{N x N}
\end{equation}

The same goes for $B_1, B_2$. As for $C_1, C_2$, we perform the same operation without limiting the radius size (not applying a sigmoid to $\hat{C}_{i_{radius}}$). 

\subsection{No Weight Decay on the SSM core} While vision transformers and MEGA regularize the model via weight decay, SSM-based layers typically do not apply this \cite{gupta2022diagonal,gu2021s4}, since it drastically lowers the models' ability to learn, especially in the context of long-range dependencies. In general, higher values of the $A_i,B,C$ parameters do not seem to correspond with overfitting. Therefore, our method does not employ weight decay on those parameters.

\section{Experimental setup\label{sec:Experimentalsetup}}
We use PyTorch for all experiments. As a deliberate decision we choose to not perform hyper-parameter tuning of the backbone and training  procedure, apart from stochastic depth. All experiment results were averaged over seeds = $[0,1,2]$. For all datasets and backbones, we set $n_{ssm} = 8 , N=16 $ for all SSM-Based variants (SSM-2D real \& complex and S4ND).

\noindent{\bf Cifar-100 and Tiny imagenet\enspace}
For both datasets, we use as a baseline the experiments performed by \cite{lee2021vision}. 
This means we follow DeiT's~\cite{touvron2021training} application of a long list of data augmentation and regularization methods, including Cutmix~\citep{yun2019cutmix}, Mixup~\citep{zhang2017mixup}, stochastic depth~\citep{huang2016deep}, repeated augmentation~\citep{hoffer2020augment}, Rand-Augment~\citep{cubuk2020randaugment}, and random erasing~\citep{zhong2020random}. 
AdamW was used as the optimizer. Weight decay was set to 0.05 (apart from SSM layer where it was set to 0), batch size to 128, and warm-up to 10. All models were trained for 100 epochs, and cosine learning
rate decay was used. The initial learning rate was set to 0.003. In certain scenarios, we noticed that our models converge faster compared to the baseline approach. We discovered that a slight modification, specifically doubling the stochastic depth, proved to be instrumental in maximizing the model's performance.

When comparing S4ND, we used the same parameter scheme being used in 2-D SSM to perform a valid comparison, by making $C \in \mathbb{C}^{n_{ssm},N}$ instead of $C \in \mathbb{C}^{H,N}$ as in the original paper.

\noindent{\bf CelebA\enspace}
For Celeb-A, the original image size is 178x218, it is resized to 224x224 to match DeiT~\cite{touvron2021training} backbone and patch size. The dataset includes a 40-way multi-label attribute classification. We are reporting an average accuracy of all 40 tasks. We use the same data augmentation, and hyperparameters as DeiT, and train the models for 20 epochs, similar to the training procedure of S4ND~\cite{nguyen2022s4nd} on this datasets. 

\noindent{\bf Imagenet and CIFAR-10 Grayscale\enspace} We use the exact same training procedure including hyper-parameters, data augmentation and training environment as used in the git repository of the baseline~\cite{ma2022mega} for those datasets.  

\end{document}